\title{Stochastic Wasserstein Barycenters}
\author{Sebastian Claici}
\author{Edward Chien}
\author{Justin Solomon}
\affil{
    Computer Science and Artificial Intelligence Laboratory\\
    Massachusetts Institute of Technology\\
    \{sclaici, edchien, jsolomon\}@mit.edu
}
\date{}
\begin{document}
\maketitle
\begin{abstract}
  We present a stochastic algorithm to compute the barycenter of a set of probability distributions under the Wasserstein metric from optimal transport. Unlike previous approaches, our method extends to continuous input distributions and allows the support of the barycenter to be adjusted in each iteration. We tackle the problem without regularization, allowing us to recover a much sharper output. We give examples where our algorithm recovers a more meaningful barycenter than previous work. Our method is versatile and can be extended to applications such as generating super samples from a given distribution and recovering blue noise approximations.
\end{abstract}


\section{Introduction}
\label{sec:intro}

Several scenarios in machine learning require summarizing a collection of probability distributions with shared structure but individual bias.  For instance, multiple sensors might gather data from the same environment with different noise distributions; the samples they collect must be assembled into a single signal.  As another example, a dataset might be split among multiple computers, each of which carries out MCMC Bayesian inference for a given model; the resulting ``subset posterior'' latent variable distributions must be reassembled into a single posterior for the entire dataset.  In each case, the summarized whole can be better than the sum of its parts:  noise in the input distributions cancels when averaging, while shared structure is reinforced.


The theory of \emph{optimal transport} (OT) provides a promising and theoretically-justified approach to averaging distributions over a geometric domain.  OT equips the space of measures with a distance metric known as the Wasserstein distance; the average, or \emph{barycenter}, of a collection $\{\mu_j\}_{j=1}^N$ is then defined as a Fr\'echet mean minimizing the sum of squared Wasserstein distances to the input distributions~\cite{agueh_barycenters_2011}.  This mean is aware of the geometric structure of the underlying space.  For example, the Wasserstein barycenter of two Dirac distributions $\delta_x$ and $\delta_y$ supported at points $x,y\in\R^n$ is a single Dirac delta at the center point $\delta_{(x+y)/2}$ rather than the bimodal superposition $\frac{1}{2}(\delta_x+\delta_y)$  obtained by averaging algebraically.



If the input distributions are discrete, then the Wasserstein barycenter is computable in polynomial time by solving a large linear program~\cite{anderes_discrete_2016}.  Adding entropic regularization yields elegant and efficient approximation algorithms~\cite{genevay_stochastic_2016,cuturi_smoothed_2016,cuturi_fast_2014,ye_fast_2017}. 
These and other state-of-the-art methods typically suffer from any of a few drawbacks, mainly (1) poor behavior as regularization decreases, (2) required access to the distribution functions rather than sampling machinery, and/or (3) a fixed discretization on which the input or output distribution is supported, chosen without knowledge of the barycenter's structure.

Given sample access to $N$ distributions $\mu_j$, we propose an algorithm that iteratively refines an approximation to the true Wasserstein barycenter. The support of our barycenter is adjusted in each iteration, adapting to the geometry of the desired output. Unlike most existing OT algorithms, we tackle the problem without regularization, yielding a sharp result. Experiments show that the support of our barycenter is contained (to tolerance) within the support of the true barycenter even though we use stochastic optimization rather than computational geometry.


\paragraph{Contributions.} We give a straightforward parallelizable stochastic algorithm to approximate and sample from the Wasserstein barycenter of a collection of distributions, which does not rely on regularization to make the problem tractable.  We only employ samplers from the input distributions, and our technique is not restricted to input or output distributions supported on a fixed set of points. 
We verify convergence properties and showcase examples where our approach is inherently more suitable than competing approaches that require a fixed support.

\section{Related Work}
\label{sec:related}

OT has made significant inroads in computation and machine learning; see~\cite{levy2017notions,peyre2018computational,solomon2018optimal} for surveys.  Although most algorithms we highlight approximate OT distances rather barycenters, they serve as potential starting points for barycenter computation.

\citet{cuturi_sinkhorn_2013} renewed interest in OT in machine learning through introduction of entropic regularization.  The resulting Sinkhorn algorithm is compact and efficient; it has been extended to barycenter problems through gradient descent~\cite{cuturi_fast_2014} or iterative projection~\cite{benamou_iterative_2015}.  Improvements for structured instances enhance Sinkhorn's efficiency, e.g.\ via fast convolution~\cite{solomon_convolutional_2015}  or multiscale approximation~\cite{schmitzer2016sparse}.


Our technique, however, is influenced more by \emph{semidiscrete} methods, which compute OT distances to distributions supported on a finite set of points.  Semidiscrete OT is equivalent to computing a power diagram~\cite{aurenhammer1987power,aurenhammer1992minkowski}, a weighted generalization of Voronoi diagrams.  Algorithms by~\citet{merigot2011multiscale} in 2D and~\citet{levy2015numerical} in 3D use computational geometry to extract gradients for the dual semidiscrete problem;~\citet{kitagawa2016convergence} accelerate convergence via a second-order Newton method.  Similar to our technique, \citet{de2012blue} move the support of a discrete approximation to a distribution to reduce Wasserstein distance.

Recent stochastic techniques target learning applications.  \citet{genevay_stochastic_2016} propose a scalable stochastic algorithm based on the dual of the entropically-regularized problem; they are among the first to consider the setting of sample-based access to distributions but rely on entropic regularization to smooth out the problem and approximate OT distances rather than barycenters.  \citet{staib2017parallel} propose a stochastic barycenter algorithm from samples, but a finite, fixed set of support points must be provided a priori.  \citet{arjovsky2017wasserstein} incorporate a coarse stochastic approximation of the 1-Wasserstein distance into a generative adversarial network (GAN); the 1-Wasserstein distance typically is not suitable for barycenter computation.

Further machine learning applications range from supervised learning to Bayesian inference. \citet{DBLP:journals/corr/abs-1708-01955} leverage OT theory for dictionary learning. \citet{DBLP:conf/icml/CarriereCO17} apply the Wasserstein distance to point cloud segmentation by developing a notion of distance on topological persistence diagrams. \citet{courty_optimal_2016} utilize the optimal transport plan for transfer learning on different domains. \citet{srivastava_wasp:_2015,srivastava_scalable_2015} use the Wasserstein barycenter to approximate the posterior distribution of a full dataset by the barycenter of the posteriors on smaller subsets; their method provably recovers the full posterior as the number of subsets increases.



\section{Background and Preliminaries}
\label{sec:background}

Let $(X, d)$ be a metric space, and let $\mathcal{P}(X)$ be the space of probability measures on $X$ with finite second moment. Given two measures $\mu_1, \mu_2 \in \mathcal{P}(X)$, the squared 2-Wasserstein distance between $\mu_1$ and $\mu_2$ is given by
\begin{align}
  \label{eq:otproblem}
  W_2^2(\mu_1, \mu_2) = \left( \inf_{\gamma \in \Gamma(\mu_1, \mu_2)} \int_{X \times X}\hspace{-.2in}d(\*x, \*y)^2\,\mathrm{d} \gamma(\*x, \*y) \right).
\end{align}
Here, $\Gamma(\mu_1, \mu_2) \subset \mathcal{P}(X \times X)$ is the set of measure couplings between $\mu_1$ and $\mu_2$:
\begin{align*}
  \Gamma(\mu_1, \mu_2)\!=\!\left\{\gamma \in \mathcal{P}(X\!\times\!X) : (\pi_x)_{\#}\gamma\!=\!\mu_1, (\pi_y)_{\#} \gamma\!=\!\mu_2 \right\},
\end{align*}
where $\pi_x$ and $\pi_y$ are the two projections of $X \times X$, and the push-forward of a measure through a measurable map is defined as $f_{\#}\mu(A) = \mu(f^{-1}(A))$ for any set $A$ in a $\sigma$-algebra of $X$.

For measures $\mu_1, \ldots, \mu_N$, we can define the Wasserstein barycenter as the minimizer of the functional
\begin{align}
  \label{eq:barycenter}
  F[\nu] = \frac{1}{N} \sum_{j=1}^N W_2^2(\nu, \mu_j).
\end{align}
When the input measures are discrete distributions, \eqref{eq:barycenter} is a linear program solvable in polynomial time.

If at least one of the measures $\mu_j$ is absolutely continuous with respect to the Lebesgue measure, then~\eqref{eq:barycenter} admits a unique minimizer $\mu^*$~\cite{agueh_barycenters_2011,santambrogio_optimal_2015}. However, $\mu^*$ will also be absolutely continuous, 
implying that computational systems typically can only find an inexact finite approximation.

We study a discretization of this problem. Suppose $\Sigma \subset X$ consists of $m$ points $\{\*x^i\}_{i=1}^m$, 
and define the functional
\begin{align}
  \label{eq:baryfinite}
  F[\Sigma] = \frac{1}{N} \sum_{j=1}^N W_2^2\left(\frac{1}{m}\sum_{i=1}^m \delta_{\*x^i}, \mu_j \right).
\end{align}

We define the main problem.
\begin{problem}[Semidiscrete approximation]
  \label{eq:problem}
  Find a minimizer of $\Sigma \to F[\Sigma]$ subject to the constraints $\Sigma \subset X$,  $|\Sigma| = m.$
\end{problem}

Solving problem~\eqref{eq:problem} for a single input measure is equivalent to finding the optimal $m$-point approximation to the input measure. We can use the solution as a set of supersamples from the input~\cite{DBLP:conf/uai/ChenWS10}, or if the input distribution is a grayscale image, the solution yields a blue noise approximation to the image~\cite{de2012blue}.



\section{Mathematical Formulation}



The OT problem~\eqref{eq:otproblem} admits an equivalent dual problem
\begin{align}
  \label{eq:otdual}
  \sup_{\phi \in L^1(X)} \int_X \phi(\*x)\,\mathrm{d}\nu(\*x) + \int_X \overline{\phi}(\*y)\,\mathrm{d}\mu(\*y),
\end{align}
where $\phi$ is the Kantorovich potential and $\overline{\phi}(\*x) \eqdef \inf_{\*y \in X} \{d(\*x,\*y)^2 - \phi(\*y)\}$ is the $c$-transform of $\phi$~\cite{santambrogio_optimal_2015,villani_optimal_2009}. 


Following \citet{santambrogio_optimal_2015}, if $\nu = \sum_{i=1}^m \frac{1}{m}\delta_{\*x^i}$ is a finite measure supported on $\Sigma = \{\*x^i\}_{i=1}^m$, then~\eqref{eq:otdual} becomes
\begin{align}
  \label{eq:otfinite}
  \max_{\phi \in \mathbb{R}^m} \left\{\sum_i \frac{1}{m} \phi^i + \int_X \overline{\phi}(\*y)\,\mathrm{d}\mu(\*y)\right\},
\end{align}
where $\bm{\phi}=(\phi^1,\ldots,\phi^m)$.
Note that the function $\phi\in L^1(X)$ is replaced with a finite-dimensional $\bm{\phi}\in\mathbb R^m$.
%

With this formula in mind, define
\begin{equation}\label{eq:F}
F_{\mathrm{OT}}[\phi,\Sigma;\mu]:=\sum_i \frac{1}{m} \phi^i + \int_X \overline{\phi}(\*y)\,\mathrm{d}\mu(\*y).
\end{equation}
Constant shifts in the $\phi^i$ do not change the value of $F_{\mathrm{OT}}$. $F_{\mathrm{OT}}$ has a simple derivative with respect to the $\phi^i$'s:
\begin{align}
  \label{eq:gradwts}
  \frac{\partial F_{\mathrm{OT}}}{\partial \phi^i} = \frac{1}{m} - \int_{V_\phi^i} \mathrm{d}\mu(\*y)
\end{align}
where $V_\phi^i$ is the \emph{power cell} 
of point $\*x^i$:
\begin{align*}
  V_\phi^i = \{x \in X : d(\*x, \*x^i)^2 - \phi^i  \leq d(\*x, \*x^{i'})^2 - \phi^{i'}, \forall i'\}.
\end{align*}

From here on we work with compact subsets of the Euclidean space $\mathbb{R}^D$ endowed with the Euclidean metric, $d(\*x, \*y) = \|\*x - \*y\|_2$. To differentiate with respect to the $\*x^i$'s, notice that the first term in equation~\eqref{eq:F} does not depend on the positions of the points. We rewrite the second term as
$$\sum_{i=1}^m \int_{V_\phi^i} (d(\*y, \*x^i)^2 - \phi^i)\,\mathrm{d}\mu(\*y).$$
Using Reynolds' transport theorem to differentiate while accounting for boundary terms shows
\begin{align}
  \label{eq:gradpts}
  \frac{\partial F_{\mathrm{OT}}}{\partial \*x^i} = \*x^i \int_{V_\phi^i} \mathrm{d}\mu(\*y)  - \int_{V_\phi^i} \*y\,\mathrm{d}\mu(\*y).
\end{align}
Equation~\eqref{eq:gradwts} confirms the intuition that each cell contains as much mass as its associated source point. We will leverage~\eqref{eq:gradpts} to design a fixed-point iteration that moves each point to the center of its power cell.

 Each subproblem of~\eqref{eq:baryfinite} admits a different Kantorovich potential $\bm{\phi}_j =(\phi_j^1,\ldots,\phi_j^m)$, giving the following optimization functional
\begin{equation}
  \label{eq:func}
  F\left[\{\bm{\phi}_j\}_{j=1}^N,\Sigma;\{\mu_j\}_{j=1}^N\right]\!=\!\frac{1}{N} \sum_{j=1}^N F_{\mathrm{OT}}[\bm{\phi}_j,\Sigma;\mu_j]
\end{equation}
Define 
$$
  a^i_j = \int_{V_{\phi^i_j}}\,\mathrm{d}\mu(\*y) \hspace{.5in}
  b^i_j = \frac{1}{a^i_j}\int_{V_{\phi_j^i}} \*y\,\mathrm{d}\mu(\*y).
$$
With this notation in place, the partial derivatives are
\begin{equation}\label{eq:Fgrad}
  \frac{\partial F}{\partial \phi^i_j}\!=\!\frac{1}{N} \left(\frac{1}{m}\!-\!a^i_j\right)\hspace{.3in}
  \frac{\partial F}{\partial \*x^i}\!=\!\frac{1}{N}\sum_{j=1}^N a^i_j \left( \*x^i\!-\!b^i_j \right).
\end{equation}

\section{Optimization}

With our optimization objective function in place, we now introduce our barycenter algorithm. To simplify nomenclature, from here on we refer to the dual potentials $\phi_j$ as weights on the generalized Voronoi diagram. Our overall strategy is an alternating optimization of $F$ in~\eqref{eq:func}:
\begin{itemize}
\item For fixed point positions, $F$ is concave in the weights and is optimized using stochastic gradient ascent.
\item For fixed weights, we apply a single 
fixed point iteration akin to Lloyd's algorithm~\cite{lloyd1982least}.
\end{itemize}

\subsection{Estimating Gradients}

Each of $a^i_j$ and $b^i_j$ can be expressed as an expectation of a simple function with respect to the $\mu_j$. We estimate these quantities by a simple Monte Carlo scheme.

In more detail, we can rewrite $a^i_j$ and $b^i_j$ as
\begin{align*}
  a_j^i = \mathbb{E}_{y \sim \mu_j}\left[\mathds{1}_{y \in V_{\phi_j}^i}\right] \hspace{.5in}
  b_j^i = \mathbb{E}_{y \sim \mu_j}\left[y \cdot\mathds{1}_{y \in V_{\phi_j}^i} \right].
\end{align*}
Here, $\mathds1$ indicates the indicator function of a set.

Since we have sample access to each $\mu_j$, the expectations can be approximated by drawing $K$ points independently $y_k \sim \mu_j$ and computing
\begin{align}
  \label{eq:approxab}
  \hat{a}_j^i = \frac{1}{K} \sum_{k=1}^K \mathds{1}_{y_k \in V_{\phi_j}^i} \hspace{.5in}
  \hat{b}_j^i = \frac{1}{K} \sum_{k=1}^K y_k \cdot\mathds{1}_{y_k \in V_{\phi_j}^i}.
\end{align}

\subsection{Concave Maximization}

The first step in our alternating optimization maximizes $F$ over the weights $\phi \in \mathbb{R}^m$ while the points $\*x^i$ are fixed. We call this step of the algorithm an \emph{ascent} step.

For a fixed set of points, the functional $F$ is concave in the weights $\phi_j$, since it is the dual of the convex semidiscrete transport problem. 
To solve for the weights, we perform gradient ascent using the formula in~\eqref{eq:Fgrad} where $a_j^i$ is approximated using $\hat a_j^i$. The gradient for a set of weights $\phi_j$ only requires computation of the density of a single measure $\mu_j$, implying that the ascent steps can be decoupled across different measures.

Write $w^0 = \phi_j$ for the initial iterate. The simplest version of our algorithm updates 
\begin{align*}
  w^{k+1} = w^k + \alpha \frac{\partial F}{\partial \phi_j}[w^k].
\end{align*}
The iterates converge when each point contains equal mass in its associated power cell.

$F$ has a known Hessian as a function of the $\phi_j$ that can be used in Newton's algorithm~\cite{kitagawa_convergence_2016}. Computing the Hessian, however, is only possible with access to the density functions of the $\mu_j$'s as it requires computing a density of the measure on the boundary between two power cells. The boundary set is inherently lower dimensional than the problem space, and hence sample access to the $\mu_j$ is insufficient. Moreover, even had we access to the probability density functions, computing the Hessian would require the Delaunay triangulation of the point set, which is expensive in more than two dimensions. 

In any event, choosing the step size $\alpha$ is important for convergence.  Line search is difficult as we do not have access to true objective value at each iterate.  Instead, we rely on Nesterov acceleration to improve performance~\cite{nesterov1983method}. With acceleration, our iterates are
\begin{align}
  \label{eq:ascendStep}
    z^{k+1} = \beta z^k + \frac{\partial F}{\partial \phi_j}[w^k]\\
    w^{k+1} = w^k + \alpha z^{k+1}.
\end{align}
where $w^k, z^k \in \mathbb{R}^m$. In our experiments, we use $\alpha = 10^{-3}$ and $\beta = 0.99$. Convergence of the accelerated gradient method can be shown when $\alpha = \nicefrac{1}{L}$ where $L$ is the Lipschitz constant of $F$; in \S\ref{sec:convergence}, we give an estimate of this constant. Our convergence criterion for this step is $\|\nabla F\|_2^2 \leq \epsilon$.

\subsection{Fixed Point Iteration}

The second step of our optimization is a fixed point iteration on the point positions.  This step is similar to the point update in a $k$-means algorithm in that it snaps points to the centers of local cells, and we refer to it as a \emph{snap} step.

We set the second gradient in~\eqref{eq:Fgrad} to zero:
\begin{align*}
  &\frac{\partial F}{\partial \*x^i} = 0 \hspace{0.3in}\Longrightarrow
  &\frac{1}{N}\sum_{j=1}^N a_j^i(\*x^i - b_j^i) = 0
\end{align*}
which leads to the point update
\begin{align}
  \label{eq:snapStep}
  \*x^i = \frac{\sum_{j=1}^N a_j^i b_j^i}{\sum_{j=1}^Na_j^i}.
\end{align}

This suggests a fixed point iteration for the $\*x^i$'s that can be decomposed into the following steps:
\begin{enumerate}
\item First find the barycenter of the power cells of each $\*x^i$ with respect to each $\mu_j$.
\item Then, average the points with weights given by the density of each measure in the cell.
\end{enumerate}

If the concave maximization has converged appropriately, and uniform areas $a^i_j$ have been achieved, then the update step becomes a uniform average over the barycenters $b^i_j$ with respect to each measure.

\subsection{Global and Local Strategies}

\begin{algorithm}[tb]
  \caption{Optimizing estimate of barycenter support}
  \label{alg:iterate}
  \begin{algorithmic}[1]
    \REQUIRE Estimate of barycenter support $\Sigma = \{\*x_i\}_{i=1}^m$
    \ENSURE Optimized barycenter support $\Sigma^*$ with lower cost.
    \FOR{$t = 1, 2, \ldots, T$}
    \FOR{$j = 1, 2, \ldots, J$}
    \STATE $z^0 \gets 0$\hfill\COMMENT{Ascent on weights}
    \STATE $w^0 \gets \phi_j$
    \WHILE{$\left\|\frac{\partial F}{\partial \phi_j}\right\| > \epsilon$}
    \STATE Compute $\hat{a}_j^i$ according to equation~\eqref{eq:approxab}
    \STATE $z^{k+1} = \beta z^k + \frac{\partial F}{\partial \phi_j}[w^k]$
    \STATE $w^{k+1} = w^k + \alpha z^{k+1}$
    \ENDWHILE
    \STATE $\phi_j \gets w^{\mathrm{end}}$
    \ENDFOR
    \STATE Compute $\hat{b}_j^i$ according to equation~\eqref{eq:approxab}
    \FOR{$\*x_i \in S$}
    \STATE $\*x_i \gets \frac{\sum_{j=1}^N \hat{a}_j^i \hat{b}_j^i}{\sum_{j=1}^N \hat{a}_j^i}$    \hfill\COMMENT{Snap points}
    \ENDFOR
    \ENDFOR
  \end{algorithmic}
\end{algorithm}

The \emph{ascent} and \emph{snap} steps can be used to refine a configuration of points $\Sigma$. Once the iterates converge, we have an $m$-point approximation to the barycenter that can be used as an initialization for $m + 1$ point approximation in two ways. A new point $\*x$ is sampled uniformly from $X$, and then we have a choice between (1) moving all points including the new one or (2) allowing only $\*x$ to move.

These two approaches are codified in Algorithm~\ref{alg:iterate} where the choice on the set $S$ dictates which points move. The number of iterations of the outer loop is fixed beforehand. Typically, we see convergence in fewer than $20$ steps, and empirically, we observe good performance even with $T = 1$. The two most natural choices for $S$ are $S = \Sigma$ and $S = \{\*x\}$.  If the barycenter is absolutely continuous with respect to the underlying Lebesgue measure, these two strategies converge at the same rate asymptotically~\cite{brancolini2009long}.  The latter, however, can generate spurious samples that are not in the support of the barycenter. Optimizing the weights is regardless a global problem as moving or introducing points changes the volumes of the power cells of neighboring points.

Both algorithms are highly parallelizable, since (1) the gradient estimates are expectations computed using Monte Carlo integration and (2) the gradient step in the weights decouples across distributions.



\section{Analysis}
\label{sec:convergence}

We justify the use of uniform finitely-supported measures, and then prove that our algorithm converges to a local minimum cost under mild assumptions.

We assume in this section 
that at least one of the distributions $\mu_j$ is absolutely continuous with respect to the Lebesgue measure, ensuring a unique Wasserstein barycenter.

\subsection{Approximation Suitability}

The simplest approach for absolutely continuous measures $\mu_j \in \mathcal{P}(X)$ is to sample $p$ points from each of the $J$ measures and solve for the true barycenter of the empirical distributions~\cite{anderes_discrete_2016}. This approach likely approximates the barycenter as the number of samples increases, but requires solution of a linear program with $O(p^J)$ variables. As an alternative, \citet{staib2017parallel} propose a stochastic problem for approximating barycenters. They are able to prove a rate of convergence, but the support of their approximate barycenter is fixed to a finite set of points.

Our technique allows the support points to move during the optimization procedure, empirically allowing a better approximation of the barycenter with fewer points.  The following theoretical result shows that the use of uniform measures supported on a finite set of points can approximate the barycenter arbitrarily well: 
\begin{theorem*}[Metric convergence, \citet{kloeckner_approximation_2012,brancolini2009long}]
  Suppose $\nu_m^*$ is a uniform measure supported on $m$ points that minimizes $\frac{1}{N}\sum_{j=1}^N W_2(\nu_m^*, \mu_j)$, and let $\bar{\mu}$ denote the true barycenter of the measures $\{\mu_j\}_{j=1}^N$. Then $W_2(\nu_m^*, \bar{\mu}) \leq Cm^{-1/D}$ where $C$ depends on the space $X$, the dimension $D$, and the metric $d(\cdot, \cdot)$.
\end{theorem*}
Note that this shows convergence in probability $\nu_m^* \rightharpoonup \bar{\mu}$ since the Wasserstein distance metrizes weak convergence~\cite{villani_optimal_2009}. \citet{brancolini2009long} also show asymptotic equivalence of the local and global algorithms.

While we cannot guarantee that our method converges to $\nu^*_m$, these properties indicate that the \emph{global} minimizer of our objective provides an effective approximant to the true barycenter as the number of support points $m\to\infty$.

\subsection{Algorithmic Properties}


Under mild assumptions on the $\mu_j$ (absolute continuity wrt Lebesgue), the functional $F$ is concave in the weights $\phi^j_i$ with fixed point positions, and in fact strictly concave up to constant shifts. We can investigate the convergence properties of the gradient ascent step of the algorithm. We assume in the following section that the partial derivatives are obtained exactly, rather than approximated via sampling, so our results will hold true in the limit, as number of samples increases. We show first that the gradient of $F$ is not necessarily Lipschitz continuous.


\begin{counterexample}
  Assume $X$ is a compact subset of $\mathbb{R}^D$. There are measures $\mu \in \mathcal{P}(X)$ for which the gradient of $F$ is not Lipschitz continuous. A set of weights that satisfies $ \frac{\partial F}{\partial \phi} = 0$ may not exist, and if it does, it may not be unique.
\end{counterexample}
\begin{proof}[Construction]
  We provide a counterexample for $D=1$. Let $X = [-1, 1]$ with the standard metric and $\mu = \delta_{0}$. Let $\Sigma = \{-1, 1\}$ be the fixed positions, and take $\phi_1 = \{-\epsilon, 0\}$ and $\phi_2 = \{\epsilon, 0\}$ for small $\epsilon$. Then $\|\phi_1 - \phi_2\|_1 = 2\epsilon$, but $\|\nabla F_{\phi}[\phi_1] - \nabla F_{\phi} [\phi_2]\|_1 = 2$.

  Non-existence is shown in Figure~\ref{fig:nonexist}. To see non-uniqueness, take $\mu = \frac{1}{2} \delta_{-\epsilon} + \frac{1}{2} \delta_{\epsilon}$ with $\Sigma$ as before. Any set of weights in $(-\epsilon, \epsilon)^2$ minimizes $F_{\phi}$.
\end{proof}

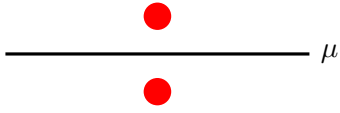
\begin{figure}[tb]
  \centering
  \begin{tikzpicture}
    \node[circle,fill=red] at (0,0.5) {};
    \node[circle,fill=red] at (0,-0.5) {};
    \draw[very thick] (-2,0) --  (2,0) node[right] (C) {$\mu$};
  \end{tikzpicture}
  \caption{Non-existence of a set of weights. Let $\mu$ be the uniform measure on the line segment, and $\Sigma$ be the two red points such that the line between them is orthogonal to the support of $\mu$. There is no set of weights such that the mass of $\mu$ is split evenly between the two red points.}
  \label{fig:nonexist}
\end{figure}

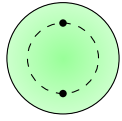
\begin{figure}[tb]
  \centering
  \begin{tikzpicture}[scale=0.7]
    \filldraw[even odd rule, inner color=green!40, outer color=green!20] (0,0) circle (30pt);
    \draw[dashed] (0,0) circle (19pt);
    \fill (0,19pt) circle (2pt);
    \fill (0,-19pt) circle (2pt);
  \end{tikzpicture}
  \caption{Non-unique minimizer on two points for the uniform measure defined on the unit disk. All antipodal points on the dashed circle at distance $2/\pi$ from the center are valid minimizers.}
  \label{fig:nonuniquemin}
\end{figure}

For mildly behaved measures $\mu$ the gradient of $F$ with respect to $\phi$ is Lipschitz continuous:
\begin{lemma}
Assume $X$ is a compact subset of $\mathbb{R}^D$, and $\mu$ is absolutely continuous with respect to the Lebesgue measure, with density function $\rho$. If the $m$ points of $\Sigma$ are distinct and $\rho \leq M$ almost everywhere for some constant $M$, then:
  \begin{align*}
    \|\nabla F_{\phi}[\phi_1] - \nabla F_{\phi} [\phi_2]\|_2 \leq \sqrt{m}\frac{M S}{2L} \|\phi_1 - \phi_2\|_2.
  \end{align*}
where $S$ denotes the surface area of $\partial \mathrm{conv}(X)$ and $L$ denotes the minimum pairwise distance between points in $\Sigma$.
\end{lemma}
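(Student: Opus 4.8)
The plan is to reduce the statement to a bound on how much the $\mu$-mass of a single power cell can change as the weights move, and then to control that change through the displacement of the cell's bounding hyperplanes. By the gradient formula~\eqref{eq:gradwts}, the $i$-th component of $\nabla F_\phi[\phi]$ is $\frac1m - a^i(\phi)$ with $a^i(\phi)=\int_{V_\phi^i}\rho(\mathbf y)\,\mathrm d\mathbf y$. The constant $\frac1m$ cancels in the difference, so the $i$-th entry of $\nabla F_\phi[\phi_1]-\nabla F_\phi[\phi_2]$ is $a^i(\phi_2)-a^i(\phi_1)=\int_X\rho\,(\mathds 1_{V_{\phi_2}^i}-\mathds 1_{V_{\phi_1}^i})$, whose magnitude is at most $M\,\mathrm{Vol}(V_{\phi_1}^i\triangle V_{\phi_2}^i)$ because $\rho\le M$. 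Passing from the $\ell_2$ norm to the largest coordinate via $\|v\|_2\le\sqrt m\,\max_i|v_i|$, it then suffices to prove, for every $i$,
\begin{align*}
\mathrm{Vol}(V_{\phi_1}^i\triangle V_{\phi_2}^i)\;\le\;\frac{S}{2L}\,\|\phi_1-\phi_2\|_2.
\end{align*}

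For the geometry, I would first record that for the squared-Euclidean cost each cell is a convex polytope whose interface with a neighbor $V_\phi^{i'}$ lies on the hyperplane with unit normal $\mathbf n_{ii'}=(\mathbf x^{i'}-\mathbf x^i)/\|\mathbf x^{i'}-\mathbf x^i\|$ and offset $\mathbf x\cdot\mathbf n_{ii'}=\bigl(\|\mathbf x^{i'}\|^2-\|\mathbf x^i\|^2+(\phi^i-\phi^{i'})\bigr)/(2\|\mathbf x^{i'}-\mathbf x^i\|)$. Perturbing $\phi$ by $\Delta\phi$ therefore translates this interface a distance $|\Delta\phi^i-\Delta\phi^{i'}|/(2\|\mathbf x^{i'}-\mathbf x^i\|)\le|\Delta\phi^i-\Delta\phi^{i'}|/(2L)$, where the inequality is exactly where the distinctness hypothesis enters, guaranteeing all pairwise distances are at least $L$. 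The symmetric difference $V_{\phi_1}^i\triangle V_{\phi_2}^i$ is the region swept out as the faces of cell $i$ translate, so integrating the (a.e.-defined) directional derivative of $a^i$ along the segment $\phi(t)=(1-t)\phi_1+t\phi_2$ — equivalently, a shell estimate using the boundary-integral form of $\partial a^i/\partial\phi^{i'}$ — will bound this volume by $\sum_{i'\neq i}\tfrac{|\Delta\phi^i-\Delta\phi^{i'}|}{2L}\,\mathcal H^{D-1}(\partial V^i\cap\partial V^{i'})$.

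The remaining ingredient is a bound on the total interior face area of a single cell, and this is where $S$ appears. Since $V_\phi^i\cap\mathrm{conv}(X)$ is convex and contained in $\mathrm{conv}(X)$, monotonicity of surface area under inclusion of convex bodies yields $\sum_{i'\neq i}\mathcal H^{D-1}(\partial V^i\cap\partial V^{i'})\le\mathcal H^{D-1}(\partial\,\mathrm{conv}(X))=S$, with any face lying outside $X$ discarded since it carries no $\rho$-mass. Pulling the displacement factor out with $|\Delta\phi^i-\Delta\phi^{i'}|\le\|\phi_1-\phi_2\|_2$ and combining with the sweep estimate gives $\mathrm{Vol}(V_{\phi_1}^i\triangle V_{\phi_2}^i)\le\tfrac{S}{2L}\|\phi_1-\phi_2\|_2$; the $\sqrt m$ from the coordinatewise norm conversion then produces the stated Lipschitz constant.

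The hard part is the boundary-integral form of $\partial a^i/\partial\phi^{i'}$ underlying the sweep estimate: rigorously differentiating the cell mass in the weights (a Reynolds-transport/coarea computation) requires the interfaces to be genuinely $(D-1)$-dimensional and of finite area, and requires handling the fact that the combinatorial structure of the power diagram can change along the path $\phi(t)$. Absolute continuity of $\mu$ is precisely what makes the interfaces $\mu$-null and the map $t\mapsto a^i(\phi(t))$ absolutely continuous, legitimizing the segment integration, while distinctness of the points (hence $L>0$) keeps the interface speeds finite; indeed, the earlier counterexample shows both ingredients are necessary. The only place where the numerical constant is delicate is the passage $|\Delta\phi^i-\Delta\phi^{i'}|$ versus $\|\phi_1-\phi_2\|_2$, where a marginally sharper or looser factor could surface.
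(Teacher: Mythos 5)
Your proposal follows essentially the same route as the paper's proof: identify each component of the gradient difference with the change in a single cell's $\mu$-mass, bound that by $M$ times the swept volume using the face-displacement rate $\|\phi_1-\phi_2\|_2/(2L)$ and the bound $S$ on a cell's total face area, and pick up the $\sqrt{m}$ by assembling the components. Your write-up is in fact more careful than the paper's terse argument (explicit hyperplane offsets, the segment/sweep integration, and surface-area monotonicity of convex bodies to justify the bound by $S$), and the possible $\sqrt{2}$ slack you flag in passing from $|\Delta\phi^i-\Delta\phi^{i'}|$ to $\|\phi_1-\phi_2\|_2$ is present, unacknowledged, in the paper's stated constant as well.
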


\begin{proof}
  Consider the $i$th component of the gradient:
  \begin{align*}
    \left|\frac{ \partial F_{\phi}}{\partial \phi^i}[\phi_1] - \frac{ \partial F_{\phi}}{\partial \phi^i}[\phi_2] \right| &= \left|  \int_{V^i_{\phi_1}} \rho \, \mathrm{d}\lambda - \int_{V^i_{\phi_2}} \rho \, \mathrm{d}\lambda \right| \\
    &\leq \frac{S\| \phi_1 - \phi_2 \|_2}{2L}M.
  \end{align*}
The second inequality follows as the area of a power cell is bounded by $S$ and the faces of the cells change at a rate linear in $\| \phi_1 - \phi_2 \|_2$. The rate is dependent on the distance between the points, so the constant $L$ is required. The Lipschitz bound follows directly from considering all components of the gradient difference together.
\end{proof}

This lemma implies convergence for a step size that is the inverse of the Lipschitz constant. While the above requires absolute continuity of $\mu$, we have found that our ascent steps and method often converge even when this is not satisfied (see Figures~\ref{fig:sharp} and \ref{fig:ellipse}).

%
%
%
%

We may also show that our algorithm monotonically decreases $F[\Sigma]$ (defined in Equation~\eqref{eq:baryfinite}) after each pair of snap and then ascent steps for compact domain and absolutely continuous $\mu_j$. Recall that the transport cost for a map $T: X \to \Sigma$ sending measure $\mu_j$ to $\frac{1}{m}\sum_i \delta_{\mathbf{x}^i}$  is:
\[ \int_X d(x,T(x))^2 \, d \mu_j. \]
Fixing the power cells $V^i_j$ after an ascent step, we define $T_j(\Sigma)$ to be the transport cost for the map sending the power cells $V^i_j$ to the point set $\Sigma$, and $TC(\tilde{\Sigma}) = \frac{1}{N} \sum_j T_j$ to be the joint (average) transport cost. Letting $\tilde{\Sigma} = \{ \mathbf{\tilde{x}^i} \}$ denote the new positions after a snap step, we may now show:

\begin{lemma}
For $X \subset \mathbb{R}^D$ compact, and $\mu_j$ absolutely continuous with respect to the Lebesgue measure for all $j$:
  \begin{equation*}
    F[\tilde{\Sigma}] \leq F[\Sigma].
  \end{equation*}
\end{lemma}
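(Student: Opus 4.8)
The plan is to introduce the frozen-cell transport cost $TC$ as an intermediary and to establish the chain
\begin{equation*}
F[\tilde{\Sigma}] \;\le\; TC(\tilde{\Sigma}) \;\le\; TC(\Sigma) \;=\; F[\Sigma],
\end{equation*}
after which the lemma follows immediately by transitivity. Each of the three links is a single elementary fact — two about optimal transport and one about minimizing a quadratic — so the work lies entirely in justifying the relations rather than in any long computation.

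For the rightmost equality $TC(\Sigma) = F[\Sigma]$, I would appeal to the semidiscrete optimal transport theory recalled above (see \citet{santambrogio_optimal_2015}): once the ascent step has converged, the weights $\bm{\phi}_j$ are optimal and the power cells $V^i_j$ realize the \emph{optimal} transport plan from $\mu_j$ to $\frac{1}{m}\sum_i \delta_{\*x^i}$. Hence the map sending each cell $V^i_j$ to its generator $\*x^i$ attains the Wasserstein cost exactly, so $T_j(\Sigma) = W_2^2\!\left(\frac{1}{m}\sum_i \delta_{\*x^i}, \mu_j\right)$; averaging over $j$ gives $TC(\Sigma) = F[\Sigma]$.

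For the middle inequality, I would observe that with the cells held fixed the joint cost $TC(\Sigma) = \frac{1}{N}\sum_j \sum_i \int_{V^i_j} \|\*y - \*x^i\|_2^2\,\mathrm{d}\mu_j(\*y)$ is separable and strictly convex in each generator $\*x^i$. Setting its gradient to zero reproduces exactly the snap update~\eqref{eq:snapStep}, so the post-snap positions $\tilde{\Sigma}$ are the \emph{global} minimizers of $TC$ over all placements of the generators with these frozen cells. This is the centroidal argument underlying Lloyd's algorithm, and it immediately yields $TC(\tilde{\Sigma}) \le TC(\Sigma)$.

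The leftmost inequality $F[\tilde{\Sigma}] \le TC(\tilde{\Sigma})$ is where I expect the only genuine care to be required, and it is the main (conceptual) obstacle. The point is that the map carrying the \emph{old} cell $V^i_j$ onto the \emph{new} generator $\tilde{\*x}^i$ remains a \emph{feasible} transport plan between $\mu_j$ and $\frac{1}{m}\sum_i \delta_{\tilde{\*x}^i}$: because convergence of the ascent step equalized the cell masses at $a^i_j = \frac{1}{m}$, this map still pushes $\mu_j$ forward onto the uniform measure on the new support, even though the frozen cells are no longer optimal power cells for $\tilde{\Sigma}$. Since $W_2^2$ is an infimum over all feasible plans, the cost $T_j(\tilde{\Sigma})$ of this particular suboptimal plan can only exceed $W_2^2\!\left(\frac{1}{m}\sum_i \delta_{\tilde{\*x}^i}, \mu_j\right)$, and averaging over $j$ gives $F[\tilde{\Sigma}] \le TC(\tilde{\Sigma})$. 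The crux is to keep the bookkeeping straight: $TC$ is always measured against the cells frozen after the previous ascent, so it \emph{equals} the optimal cost at $\Sigma$ but merely \emph{dominates} it at $\tilde{\Sigma}$, and this asymmetry is precisely what powers the monotone decrease.
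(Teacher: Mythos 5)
Your proposal is correct and follows essentially the same route as the paper: both establish the chain $F[\Sigma] = TC(\Sigma) \ge TC(\tilde{\Sigma}) \ge F[\tilde{\Sigma}]$, using strong duality for the equality, the centroidal/Lloyd argument (setting the gradient of the frozen-cell cost to zero recovers the snap update) for the middle inequality, and the suboptimality of the frozen-cell plan for the last. If anything, you are slightly more careful than the paper at the final step, where you make explicit that feasibility of the frozen-cell plan as a coupling with the uniform measure on $\tilde{\Sigma}$ requires the equalized cell masses $a^i_j = \frac{1}{m}$ produced by the converged ascent step — a condition the paper only invokes implicitly.
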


\begin{proof}
 By strong duality, we have the following equality for each $j$ when the $\phi$ have been optimized after an ascent step:
\begin{equation*}
F_{OT}[\phi, \Sigma; \mu_j] = W^2_2\left( \frac{1}{m} \sum^m_{i=1} \delta_{\mathbf{x}^i}, \mu_j \right).
\end{equation*}
This implies that $F[\Sigma] = TC(\Sigma)$ as $W^2_2$ is simply the optimal transport cost. We now argue that $TC(\tilde{\Sigma}) \leq TC(\Sigma)$. We may split up the integrals for transport cost over the power cells corresponding to each $i$th point. We differentiate $\sum^N_{j = 1} \int_{V^i_j} \|x - p\|^2 \, d \mu_j$ with respect to $p$ to find the point with lowest joint transport cost to the cells $V^i_j$. Setting this to 0 yields $\sum_{j=1}^N a_j^i b^i_j - a^i_j p  = 0$.

Note this is equivalent to the barycenter update step in Equation~\eqref{eq:snapStep}, and with convergence of the previous ascent step, we should have uniform $a^i_j$ weights. This demonstrates that snapping to the uniform average of barycenters lowers $TC$, and we have that $F[\Sigma] = TC(\Sigma) \geq TC(\tilde{\Sigma}) \geq F[\tilde{\Sigma}]$. The last inequality follows as the next ascent step will find the optimal transport and decrease the transport cost.
\end{proof}

With joint transportation cost being non-negative, our objective function converges to a local minimum. This does not imply that our iterates converge, as there may not be a unique minimizing point configuration (see Figure~\ref{fig:nonuniquemin}). Empirically, our iterates converge in all of our test cases. We note also that our formula bears some resemblance to the mean-shift algorithm and to Lloyd's algorithm, both of which which are also known to converge under some assumptions \cite{li2007note,bottou1995convergence}.

\begin{figure*}[!t]
  \centering
  \begin{tabular}{@{}ccc@{}}
    \includegraphics[width=.25\textwidth]{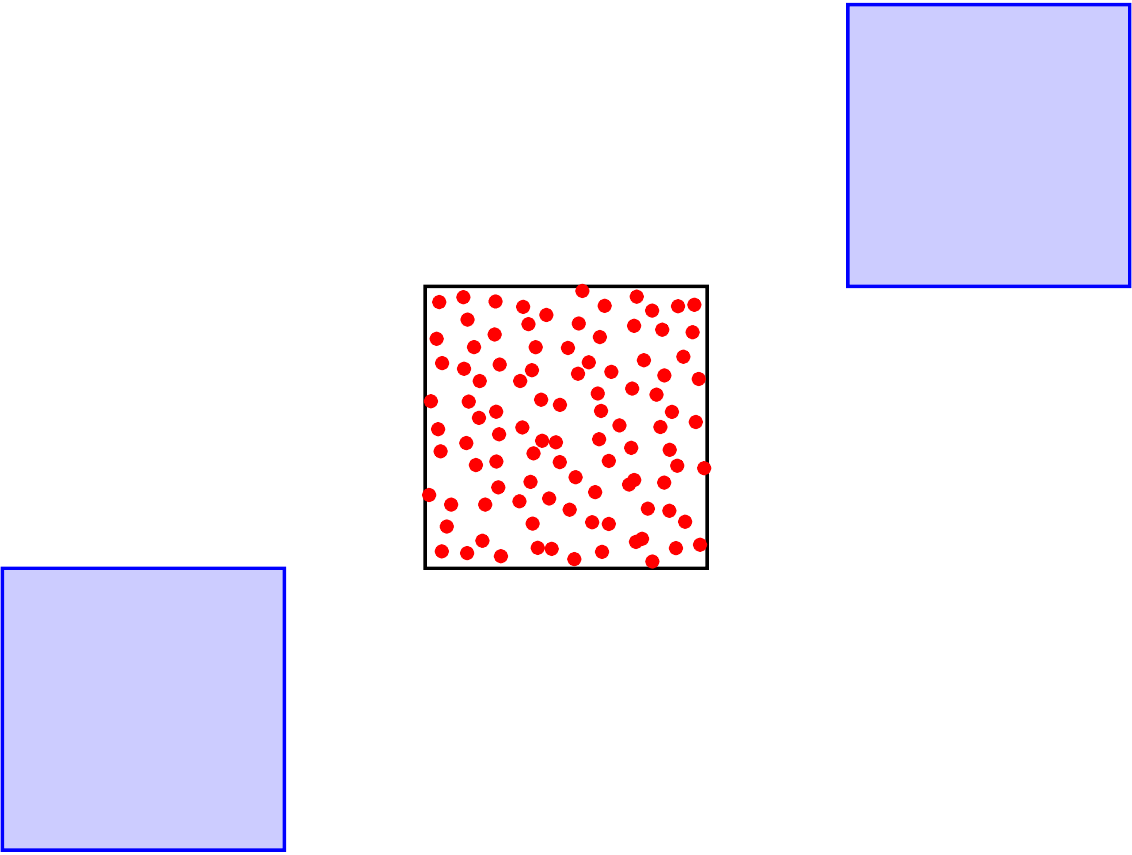} &
    \includegraphics[width=.25\textwidth]{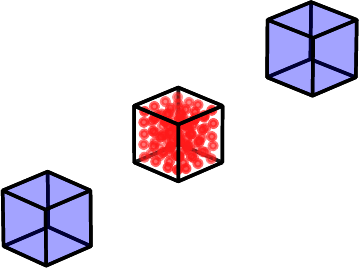} &
    \includegraphics[width=.25\textwidth]{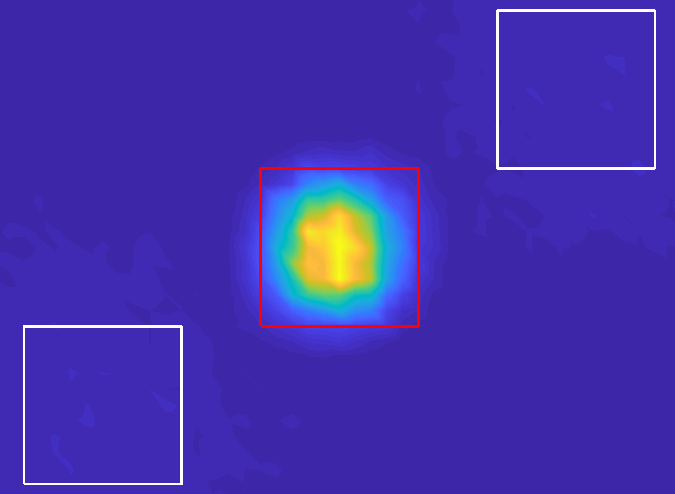}\\
    (a) & (b) & (c)
  \end{tabular}
  \caption{Barycenter when $N = 2$ tested on two uniform distributions over unit squares. (a) Our output: the input distributions are shown in blue, while the output barycenter points are shown in red, with the limits of the true barycenter in black. (b) A similar example in three dimensions. (c) The output barycenter of \cite{staib2017parallel}: note the output has non-zero measure outside the true barycenter.}
  \label{fig:mccann}
\end{figure*}

\begin{figure*}[!t]
  \centering
  \begin{tabular}{@{}ccc@{}}
    \includegraphics[width=.23\textwidth]{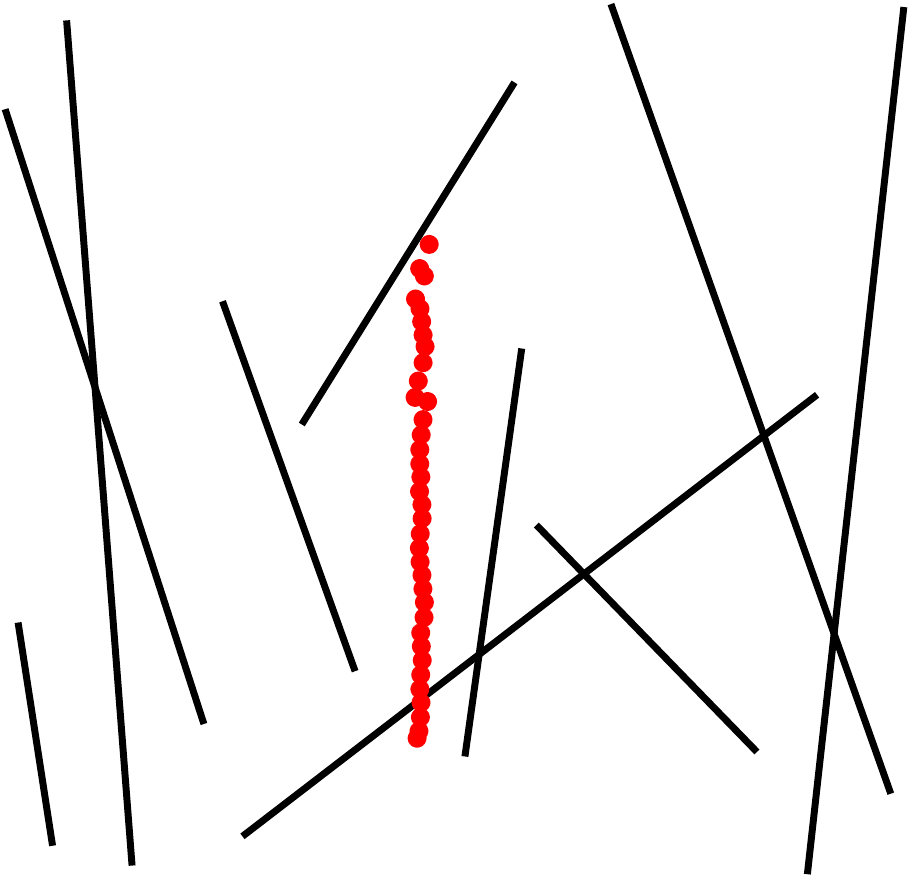} &
    \includegraphics[width=.23\textwidth]{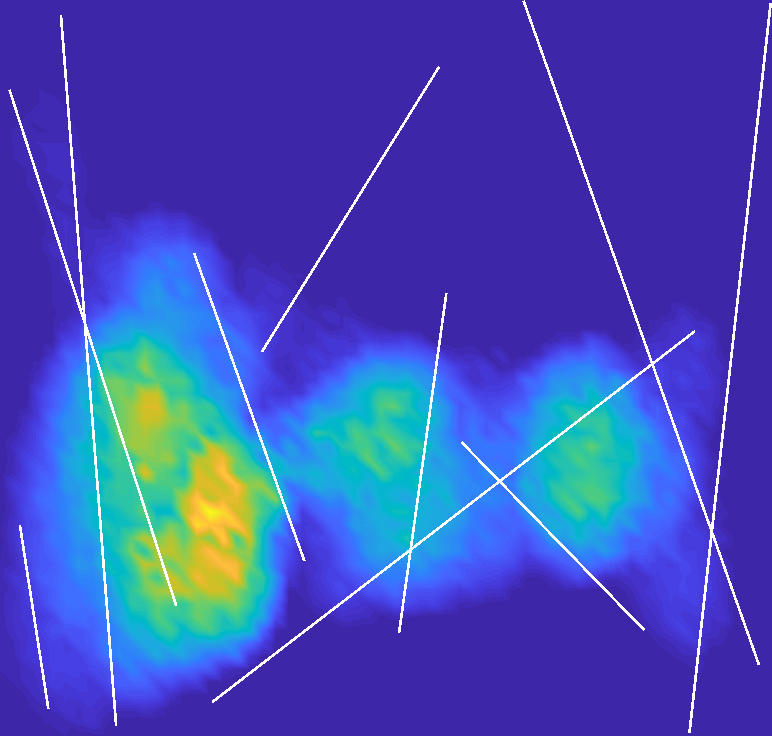} &
    \includegraphics[width=.23\textwidth]{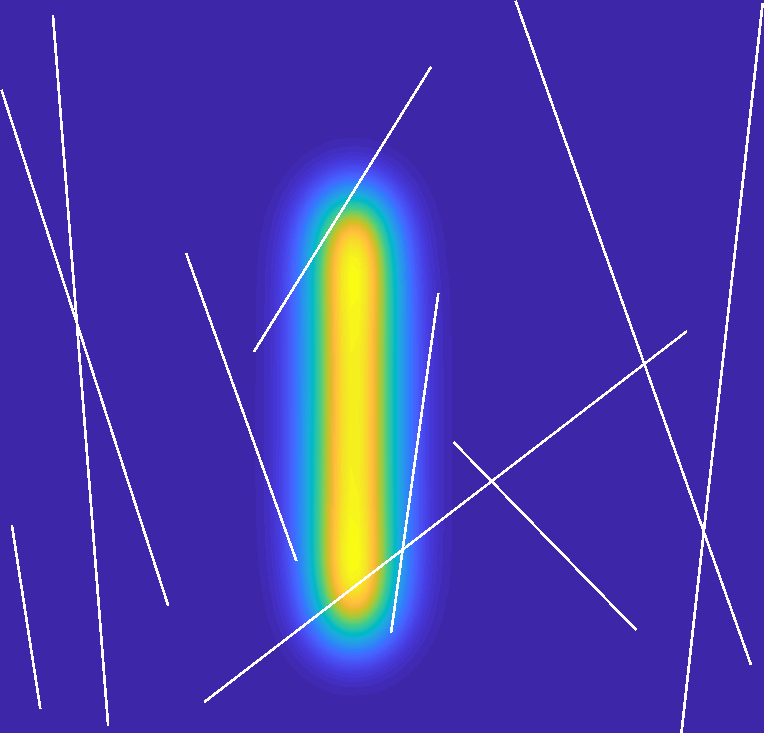}\\
    (a) & (b) & (c)
  \end{tabular}
  \caption{Barycenter of sharp featured distributions. (a) 50 points from our algorithm yields a barycenter supported on a line. (b) The barycenter from~\cite{staib2017parallel} using a grid of 20000 points. (c) Barycenter from~\cite{solomon_convolutional_2015} using a regularizer value of $\gamma = 0.1$; smaller regularizers were numerically unstable.}
  \label{fig:sharp}
\end{figure*}

\begin{figure*}[!t]
  \centering
  \begin{tabular}{@{}cccc@{}}
    \includegraphics[width=.21\textwidth]{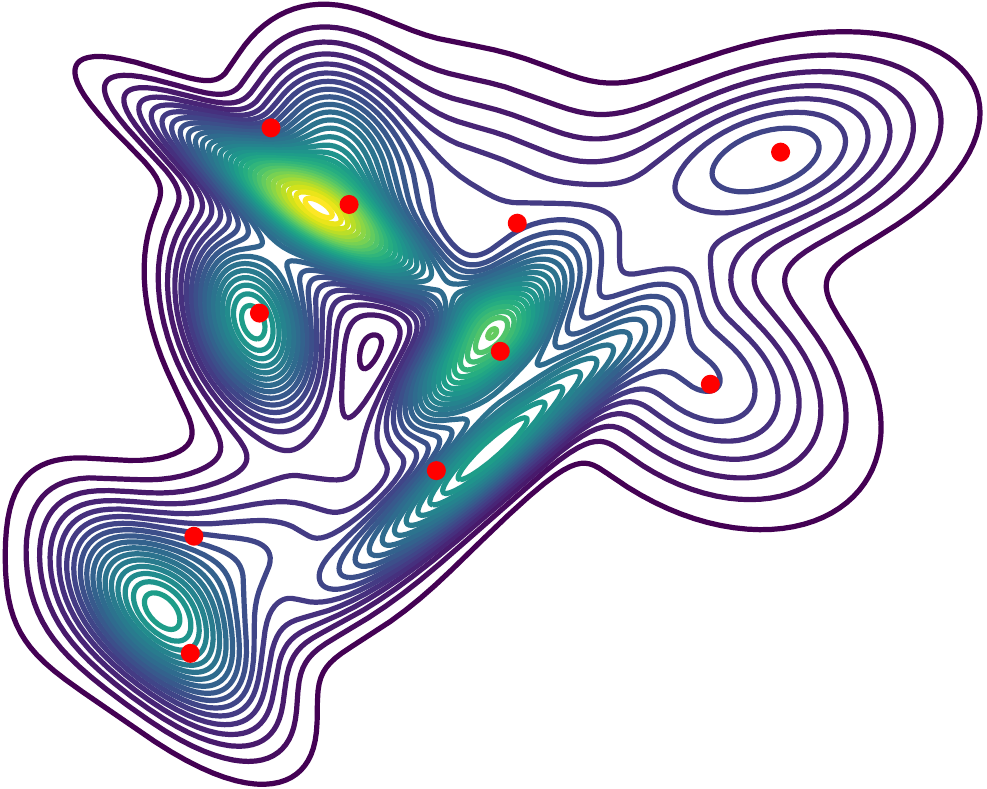} &
    \includegraphics[width=.21\textwidth]{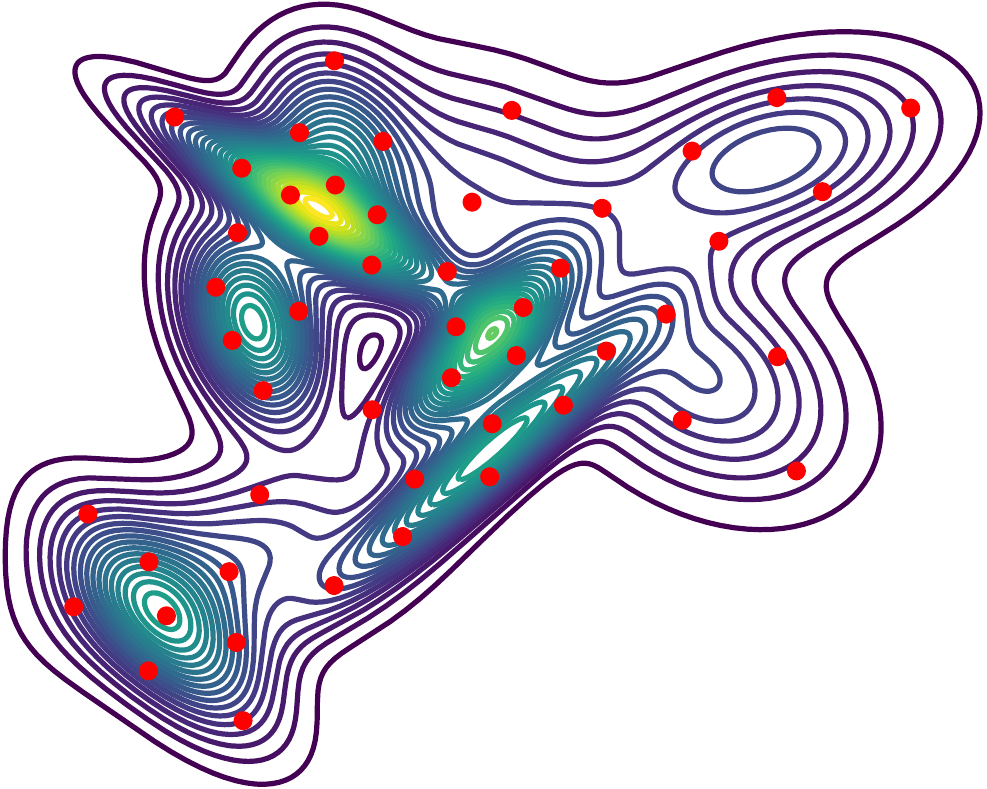} &
    \includegraphics[width=.21\textwidth]{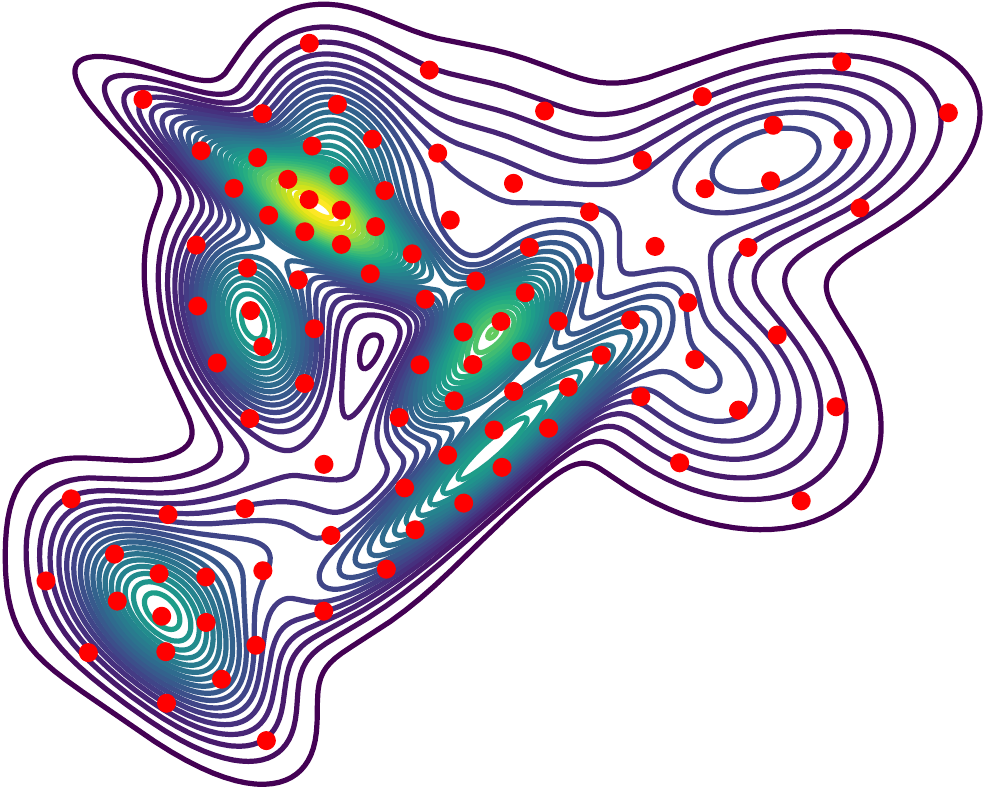} &
    \includegraphics[width=.21\textwidth]{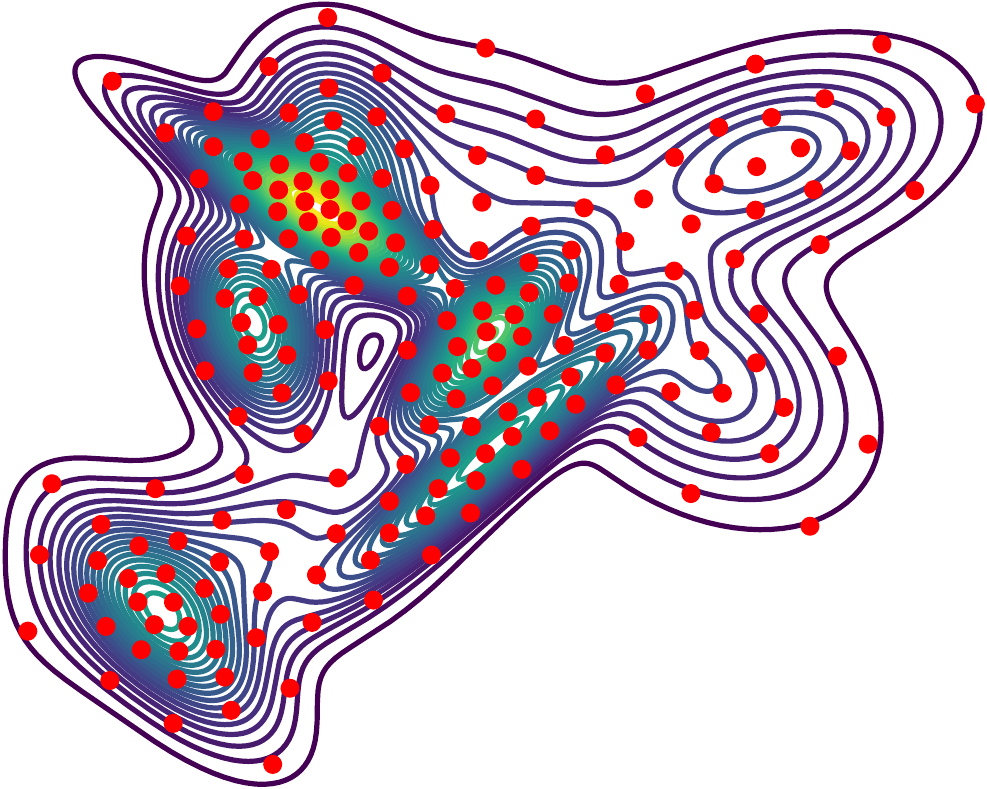}\\
    \includegraphics[width=.21\textwidth]{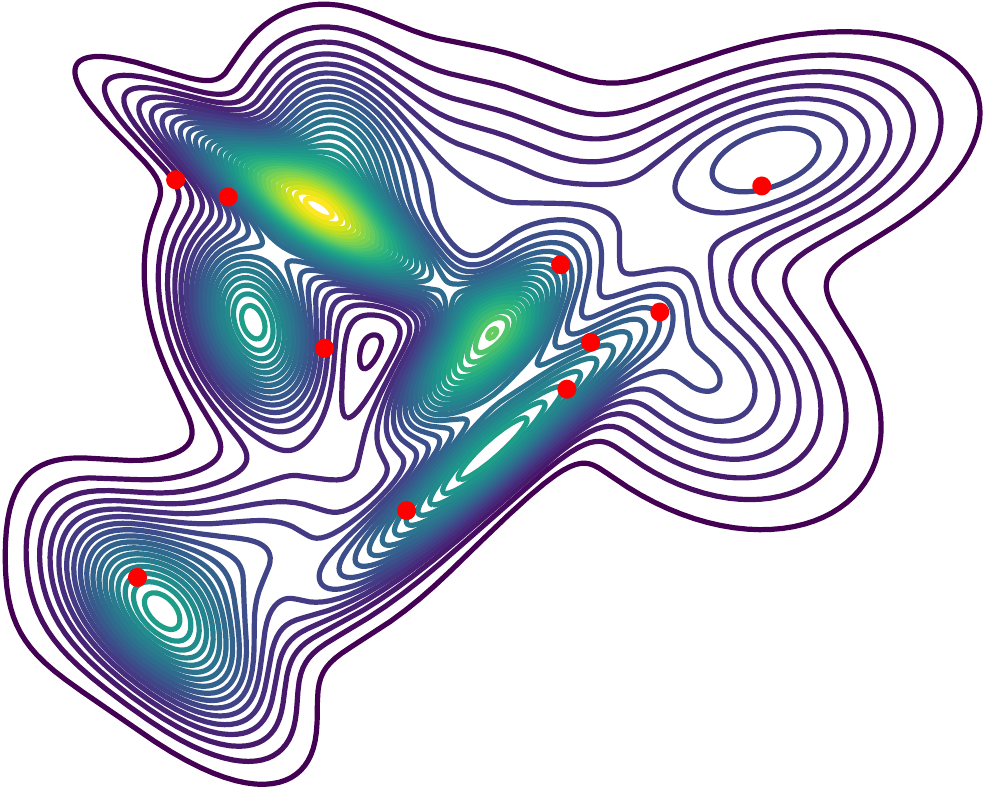} &
    \includegraphics[width=.21\textwidth]{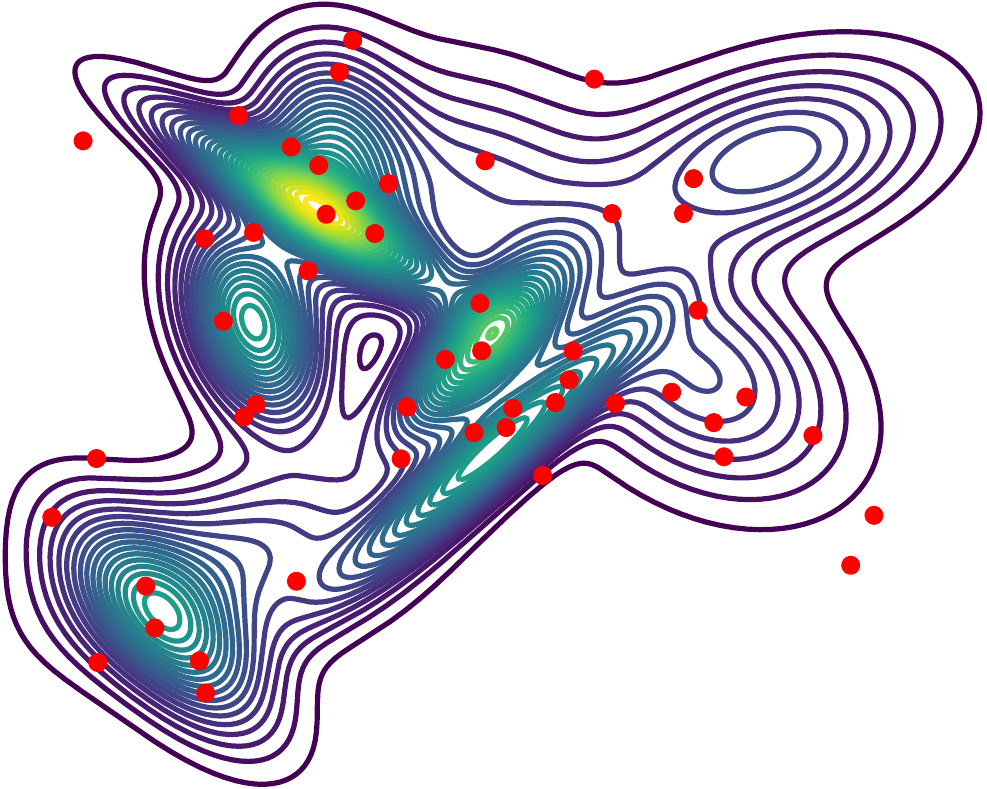} &
    \includegraphics[width=.21\textwidth]{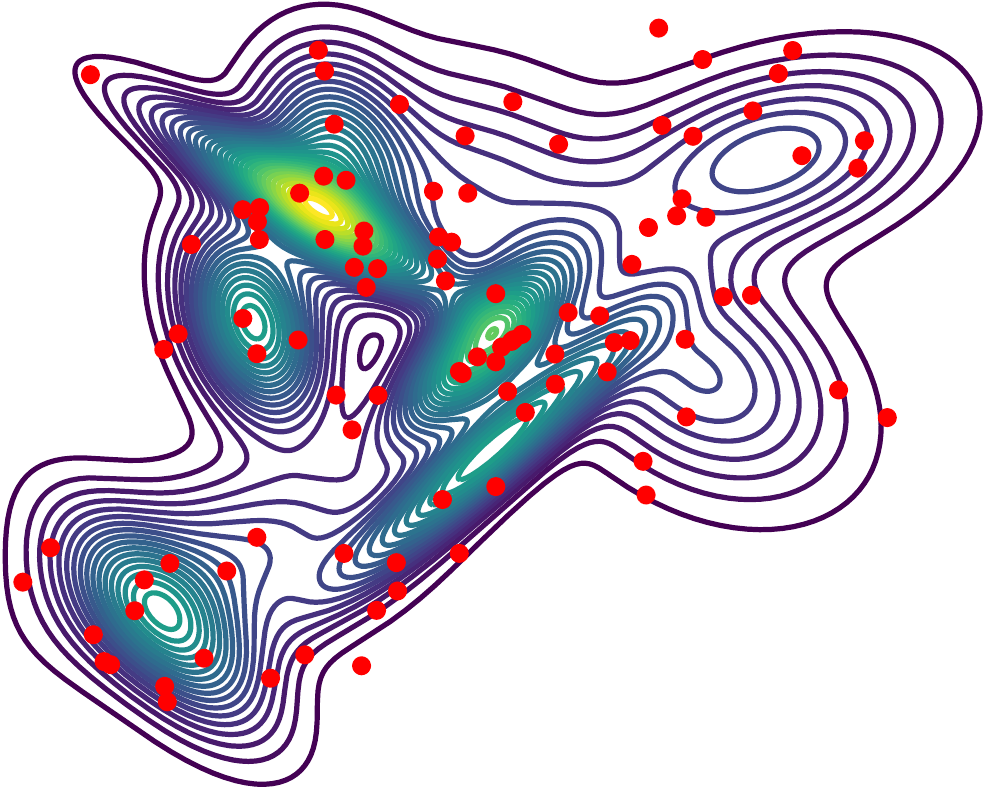} &
    \includegraphics[width=.21\textwidth]{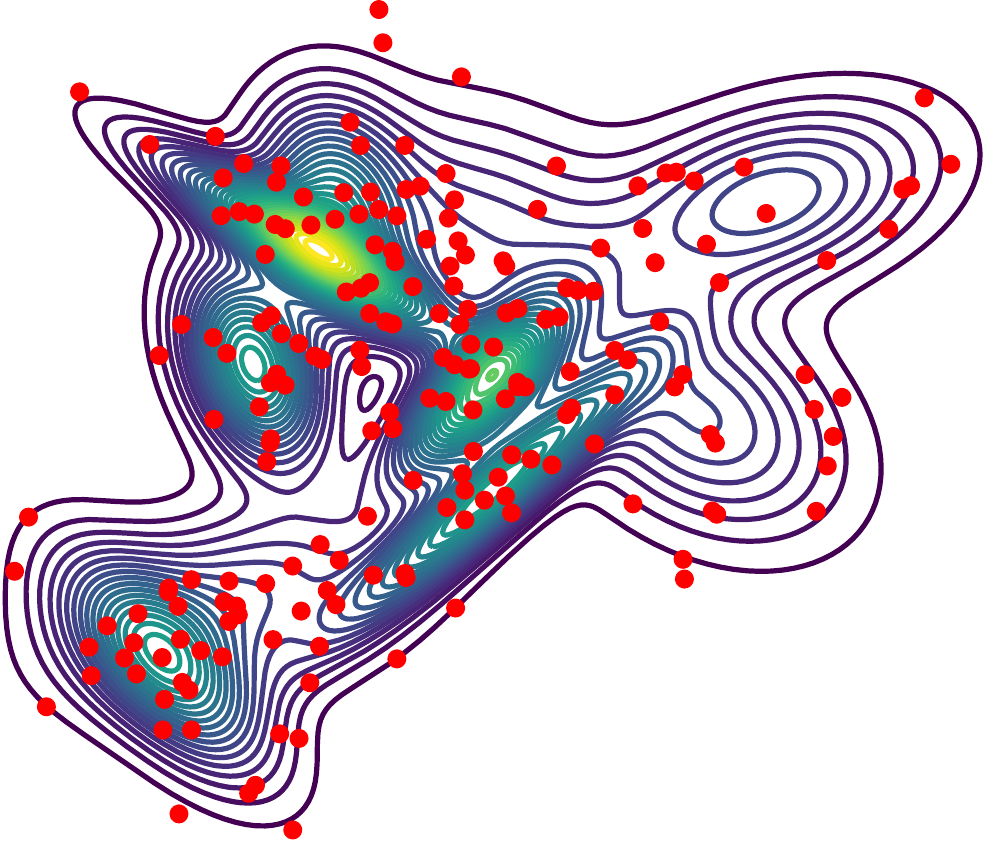}
  \end{tabular}
  \caption{The $n$ point approximation of a mixture of ten Gaussians. Top row: our method with 10, 50, 100, and 200 points. Bottom row: iid sampling with the same number of points.}
  \label{fig:mixture}
\end{figure*}



\section{Experiments}

\begin{figure}[!t]
  \centering
  \begin{tabular}{@{}c@{}}
    \includegraphics[width=.5\columnwidth]{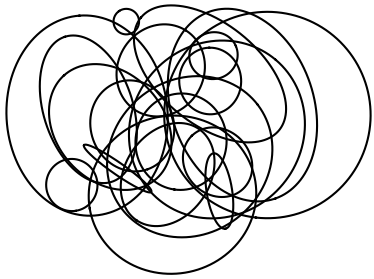}
  \end{tabular}
  \begin{tabular}{@{}c@{\hskip .2\columnwidth}c@{}}
    \includegraphics[width=.35\columnwidth]{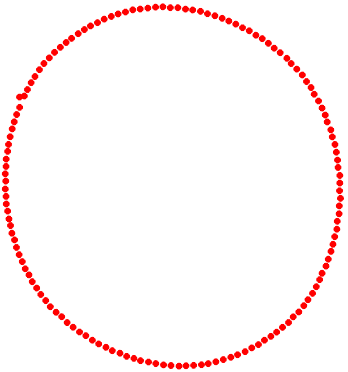} &
    \includegraphics[width=.35\columnwidth]{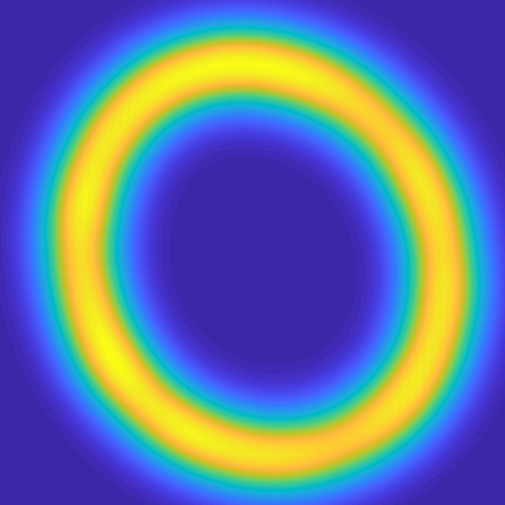}\\
    (a) & (b)
  \end{tabular}
  \caption{Barycenter of randomly generated ellipses. Top: plot showing 20 ellipses with randomly drawn center, semi-major and semi-minor axes, and skew. Bottom: (a) The output of our algorithm is a sharp distribution approximating a circle. (b) The output of~\cite{solomon_convolutional_2015} with a regularizer value of $\gamma = 0.1$.}
  \label{fig:ellipse}
\end{figure}

\begin{figure}[!t]
  \centering
  \begin{tabular}{@{}c@{\hskip .15\columnwidth}c@{}}
    \includegraphics[width=.4\columnwidth]{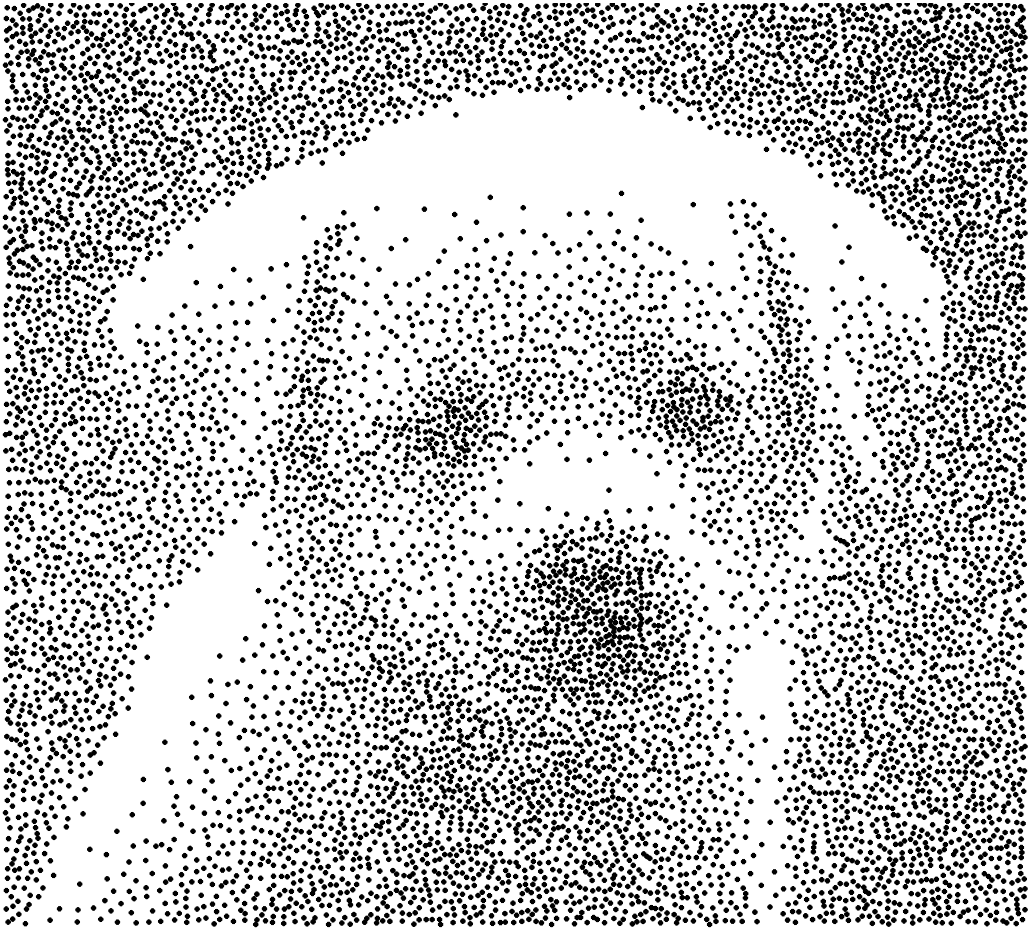} &
    \includegraphics[width=.4\columnwidth]{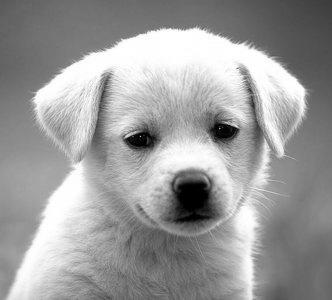}
  \end{tabular}
  \caption{Blue noise sampling. Left: 10K samples from our algorithm. Right: Original image (approximately 90K pixels).}
  \label{fig:bluenoise}
\end{figure}

We showcase the versatility of our method on several applications. We typically use between 16K and 256K samples per input distribution to approximate the power cell density and barycenter. The variance is due to different problem sizes and dimensionality of the input measures. We stop the gradient ascent step when $\|\nabla F\|_2^2 \leq 10^{-6}$. The snap step empirically converges in under 20 iterations, and several of our examples use only one step.

\subsection{Distributions with Sharp Features}

Our algorithm is well-suited to problems where the input distributions have very sharp features. We test against the algorithms in \cite{staib2017parallel} and \cite{solomon_convolutional_2015} on two test cases: ten uniform distributions over lines in the 2D plane (Figure~\ref{fig:sharp}), and 20 uniform distributions over ellipses (Figure~\ref{fig:ellipse}).

The results of Figures \ref{fig:sharp} and \ref{fig:ellipse} show that our barycenter is more sharply supported than the results of competing methods. Our output agrees with that of~\citet{solomon_convolutional_2015}, but our results more closely match expected behavior. We strongly suspect that the true barycenter in Figure~\ref{fig:sharp} is also a uniform measure on a line, while that in Figure~\ref{fig:ellipse} is a circle centered at the origin.

\subsection{The Case $N = 2$}

In the case of two input measures $\mu_1$ and $\mu_2$, we expect the barycenter to be McCann's interpolant~\cite{agueh_barycenters_2011,mccann1997convexity}:
\begin{align*}
  \mu_{1/2} \eqdef \left(\frac{1}{2}\text{id} + \frac{1}{2}T\right)_{\#}\mu_0 = \left(\frac{1}{2} \text{id} + \frac{1}{2}T^* \right)_{\#}\mu_1
\end{align*}
where $T$ is the optimal map, and $T^*$ is the inverse map, while $\#$ denotes the pushforward of a measure.

We test this on two uniform distributions on the unit square in Figure~\ref{fig:mccann}. The transport map in this case is transport of the entire distribution along a straight line. As expected from McCann's interpolant, we recover a uniform distribution on the unit square halfway between the two input distributions. We show our results alongside those of \cite{staib2017parallel}. Notice that their output barycenter is not uniform, and that it has non-zero measure outside the true barycenter.

\subsection{The Case $N = 1$}

The case $N = 1$ bears interest as well. There are instances when sampling iid from a distribution yields samples that do not approximate the underlying distribution accurately. We showcase two applications in generating super samples from distributions, as well as approximating grayscale images through blue noise.

\subsubsection{Blue Noise}

The term blue noise refers to an unstructured but even and isotropic distribution of points. It has been used in image dithering as it captures image intensity via local point density, without the need for varying point sizes as in halftoning.

\citet{de2012blue} describe the link between optimal transport and blue noise generation. We recover a stochastic version of their algorithm by taking $\mu$ a discrete distribution over the image pixels proportional to intensity. As our method is more general, we observe performance loss, but the output is of comparable quality (Figure~\ref{fig:bluenoise}).

\subsubsection{Super Samples}

Our method can be adapted to generate super samples from complex distributions~\cite{DBLP:conf/uai/ChenWS10}. Figure~\ref{fig:mixture} details our results on a mixture of ten Gaussians. Our method better approximates the shape of the underlying distribution due to negative autocorrelations: points move away from oversampled regions. The points drawn iid from the mixture tend to oversample around the larger modes and do not approximate density contours as well.



\section{Conclusion}

We have proposed an algorithm for computing the Wasserstein barycenter of continuous measures using only samples from the input distributions. The algorithm decomposes into a concave maximization and a fixed point iteration similar to the mean-shift and $k$-means algorithms. Our algorithm is easy to implement and parallelize, and it does not rely on a fixed-support grid. This allows us to recover much sharper approximations to the barycenter than previous methods. Our algorithm is general and versatile enough to be applied to other problems beyond barycenter computation.

There are several avenues for future work. Solving the concave maximization problem is currently a bottleneck for our algorithm as we do not have access to the function value or the Hessian, but we believe multiscale methods can be adapted to our approach. The potential applications of this method extend beyond what was covered. One application we highlight is in developing coresets that minimize the distance to the empirical distribution on the input data.


\subsection*{Acknowledgements}
The authors thank Fernando de Goes, Marco Cuturi, Gabriel Peyr\'e, and Matthew Staib for input and early discussions.  The authors acknowledge the generous support of Army Research Office grant W911NF-12-R0011 (``Smooth Modeling of Flows on Graphs''), from the MIT Research Support Committee, from the MIT--IBM Watson AI Lab, from the Skoltech--MIT Next Generation Program, and from an Amazon Research Award.

\bibliography{barycenters}
\bibliographystyle{icml2018}

\end{document}